\documentclass[11pt]{article}

\usepackage[utf8]{inputenc}
\usepackage[T1]{fontenc}
\usepackage{amsmath,amssymb,amsthm}
\usepackage{booktabs}
\usepackage{array}
\usepackage{geometry}
\usepackage{hyperref}
\usepackage{xcolor}
\usepackage{enumitem}
\hypersetup{colorlinks=true,linkcolor=black,citecolor=blue,urlcolor=blue}

\newtheorem{proposition}{Proposition}

\newcommand{\softmax}{\mathrm{softmax}}
\newcommand{\IG}{\mathrm{IG}}
\newcommand{\Rd}{\mathbb{R}^d}

\title{\textbf{IG-Lens: Exact Additive Probability Attribution\\
Across Transformer Layers via Telescoping Integrated Gradients}}
\author{Duc Anh Nguyen\\
\small Hanoi University of Science and Technology}
\date{}

\begin{document}
\maketitle

\begin{abstract}

We ask a simple question about decoder-only transformers: \emph{between which two layers is the probability of a predicted token actually produced?} Existing layer-wise readout tools answer only approximately. The logit lens and its trained variant report a per-layer \emph{level} of probability but give no additive decomposition; their estimates are biased and non-monotone across depth. Direct Logit Attribution and related residual-stream methods are additive, but only in \emph{logit} space---the softmax nonlinearity breaks additivity in probability space, precisely the quantity one usually cares about. Layer Conductance integrates gradients per layer, but attributes each to its own baseline and so does not sum to the total change in prediction. We introduce \textbf{IG-Lens}, a telescoping application of Integrated Gradients along a single path through the hidden states from a baseline to the final layer. Crediting each segment to the layer it terminates at yields a layer-wise attribution whose sum is \emph{exactly} the change in target probability, with the softmax inside the integration path rather than linearized away. Our default estimator credits each integration step its \emph{observed} change in target probability---a prediction-aware reweighting in the spirit of IDGI---rather than its raw gradient. Because the readout is a one-dimensional probability, this collapses each segment to a telescoping sum of endpoint values, so completeness holds exactly (to floating point) at \emph{any} step count, removing Riemann discretization error while suppressing steps that show gradient sensitivity without a change in output. We give the telescoping identity and its proof, verify completeness to floating point, and describe a single-pass batched implementation computing the full token-by-layer map without any backward call. Code: https://github.com/anhnda/IGLens.

\end{abstract}

% =========================================================================
\section{Introduction}
% =========================================================================
A productive way to read a decoder-only transformer is as an iterative
inference process: the residual stream carries a running estimate of the next
token, refined block by block, and the final unembedding turns that estimate
into a distribution. This picture underlies the \emph{logit lens}
\cite{nostalgebraist2020logitlens}, which applies the final unembedding to
intermediate hidden states, and is supported by the linear, additive structure
of the residual stream established in the mathematical framework for transformer
circuits \cite{elhage2021mathematical}. A natural and practically useful
question follows: for a given predicted token, \emph{at which depth is its
probability actually decided?} Knowing the onset layer of a prediction speaks to
how the model computes---early factual recall versus late composition
\cite{geva2021kv,geva2022promotion,meng2022rome}---and is a prerequisite for
targeted interventions such as activation patching \cite{wang2023ioi}.

Despite the popularity of the question, the tools we have answer it only
approximately, and each fails in a different way. The logit lens and Tuned Lens
\cite{belrose2023tunedlens} report the \emph{level} of probability at each
layer, but do not decompose the prediction into per-layer \emph{increments}: the
per-layer probabilities do not sum to anything, and the lens estimate is biased
and non-monotone with depth. Direct Logit Attribution
\cite{elhage2021mathematical,nanda2022transformerlens} and embedding-space
methods \cite{dar2023prisms} \emph{are} additive, exploiting the linearity of
the residual stream---but they live in \emph{logit} space, and the softmax that
turns logits into probabilities is strongly nonlinear, so they cannot give an
additive decomposition of the quantity of interest. Layer-wise Integrated
Gradients and Layer Conductance \cite{sundararajan2017ig,dhamdhere2019conductance}
integrate gradients with respect to a layer, but attribute each layer to its own
baseline, so the per-layer terms do not telescope to the total change in the
prediction.

We close this gap with \textbf{IG-Lens}. The idea is to integrate the
\emph{same} final readout $f(\mathbf{h}) = \softmax(\mathrm{head}(
\mathrm{norm}(\mathbf{h})))_{y_t}$ along a single \emph{path} that visits the
chosen hidden states in order, $\mathbf{h}_{\text{base}} \to \mathbf{h}_{L_1}
\to \cdots \to \mathbf{h}_{\text{final}}$, and to credit each \emph{segment} of
that path to the layer at which it ends. Because the segment integral of a
gradient equals the difference of endpoint values, the per-layer contributions
\emph{telescope}: their sum is exactly $p(\text{final}) - p(\text{baseline})$,
with the softmax integrated through rather than linearized away. The result is,
to our knowledge, the first \emph{exact additive decomposition of probability}
across layers. Our contributions are:
\begin{itemize}[leftmargin=1.4em,itemsep=2pt]
\item \textbf{Method.} A telescoping segment-IG attribution on the residual
stream with a head-once readout, giving $\sum_L \IG_L = \Delta p$ exactly
(Proposition~\ref{prop:tele}), with the softmax and LayerNorm inside the
integration path.
\item \textbf{Prediction-aware estimator.} A default step-weighting that credits
each integration step its \emph{observed} change in target probability rather
than its raw gradient---borrowing the consistency principle of IDGI
\cite{yang2023idgi}. For a scalar probability readout this makes each segment a
telescoping sum of endpoint values, so completeness becomes \emph{exact at any
step count} and steps with sensitivity but no output response are discarded.
\item \textbf{Positioning.} A clear account of why every existing additive
layer-attribution method is either in logit space, biased in probability space,
or non-telescoping, summarized in Table~\ref{tab:compare}.
\item \textbf{Implementation.} A single-pass batched algorithm that computes the
full token~$\times$~layer map without a backward call under the default
estimator, with a correctness argument tied directly to the head-once design.
\item \textbf{Evaluation plan.} An onset quantity derived from the
decomposition, together with a causal-ablation protocol that can falsify it
cheaply, since onset has no intrinsic ground truth.
\end{itemize}

% =========================================================================
\section{Related Work}
% =========================================================================
We place IG-Lens against three families of layer-wise readout methods. Each
shares part of what IG-Lens does and is missing exactly one piece.

\paragraph{Logit lens and Tuned Lens: reading, not allocating.}
The logit lens \cite{nostalgebraist2020logitlens} reads
$p_L = \softmax(\mathrm{head}(\mathrm{norm}(\mathbf{h}_L)))$ at each layer
independently, exposing the iterative-inference picture in which the prediction
is refined layer by layer. This gives the \emph{level} of probability at each
depth, not the \emph{increment} each layer contributes. The estimator is biased
toward the final distribution and is non-monotone across layers, so $\sum_L p_L$
has no meaning. Tuned Lens \cite{belrose2023tunedlens} corrects the bias with a
learned affine probe, improving the read at each layer, but it still \emph{reads
a level} rather than \emph{allocating a delta}, and it requires training a
translator per model. IG-Lens trains nothing and yields $\sum_L \IG_L = \Delta p$
by construction.

\paragraph{Direct Logit Attribution and embedding-space methods: additive, but
in logit space.} The residual stream is a sum of component outputs, and the
unembedding maps it linearly to logits, so each component's logit contribution
can be read independently \cite{elhage2021mathematical}; this is operationalized
as Direct Logit Attribution in tooling such as TransformerLens
\cite{nanda2022transformerlens}, and analyzed in embedding space by
\cite{dar2023prisms}. This is itself an additive onset decomposition and is the
closest competitor. It differs from IG-Lens in three ways. (i) It lives on
\emph{logits}: the softmax is strongly nonlinear and these methods cannot give a
contribution to \emph{probability}, only to logits. (ii) They approximate
LayerNorm by freezing it; IG-Lens integrates through the norm, which is one
reason its completeness holds with the norm inside the loop. (iii) They allocate
per \emph{component} (head/MLP) and are embedding-independent, whereas IG-Lens
allocates per \emph{chosen layer along one path} and measures a \emph{conditional
marginal}: what norm$+$head can read out of $\mathbf{h}_L$ \emph{beyond} the
previous chosen layer.

\paragraph{Layer Conductance and Layer-IG: IG over layers, but no telescoping.}
Layer Conductance \cite{dhamdhere2019conductance}, building on Integrated
Gradients \cite{sundararajan2017ig} and available in Captum
\cite{kokhlikyan2020captum}, places a baseline at a single layer and integrates
the network output with respect to that layer's input, distributing credit down
to neurons. This is the closest method mathematically. The decisive difference
is that it attributes \emph{each layer independently to its own baseline}, so
$\sum_L$ does \emph{not} telescope to the total prediction change. IG-Lens
instead chains the hidden states on one path and credits each \emph{segment},
converting IG's completeness over input dimensions into completeness over
\emph{layers}.

\paragraph{Onset of computation.} Several lines of work assign a depth to where
a prediction is formed---key--value memories writing predictions in mid-to-late
layers \cite{geva2021kv,geva2022promotion}, factual associations localized by
causal tracing \cite{meng2022rome}, and circuits whose behavior emerges at
specific layers \cite{wang2023ioi}. These use logit-space or causal signals;
IG-Lens contributes a probability-exact additive notion of onset that competes
with the logit-space versions directly.

\paragraph{Summary.} All existing additive layer-attribution methods either live
on logits, or are biased in probability space due to softmax/LayerNorm, or do
not sum to the total change. IG-Lens is, to our knowledge, the first to give an
exact additive decomposition \emph{in probability space} with the softmax
inside the integration path. Table~\ref{tab:compare} summarizes the contrast.

\begin{table}[h]
\centering
\small
\renewcommand{\arraystretch}{1.25}
\begin{tabular}{@{}lccccc@{}}
\toprule
\textbf{Method} & \textbf{Space} & \textbf{Additive} & \textbf{Sum $=\Delta p$} &
\textbf{Norm in loop} & \textbf{Training} \\
\midrule
Logit lens \cite{nostalgebraist2020logitlens}          & prob.  & no  & no  & reads & none \\
Tuned lens \cite{belrose2023tunedlens}                 & prob.  & no  & no  & reads & probe \\
DLA / embedding \cite{elhage2021mathematical,dar2023prisms}  & logit  & yes & no$^{\dagger}$ & frozen & none \\
Layer Conductance \cite{dhamdhere2019conductance}      & output & yes & no$^{\ddagger}$ & yes & none \\
\textbf{IG-Lens (ours)} & \textbf{prob.} & \textbf{yes} & \textbf{yes} & \textbf{yes} & \textbf{none} \\
\bottomrule
\end{tabular}
\caption{Where IG-Lens sits. $^{\dagger}$DLA is additive in logit space; it does
not give an additive decomposition of probability. $^{\ddagger}$Layer
Conductance attributes each layer to its own baseline, so the per-layer terms do
not telescope to the total change. ``Norm in loop'' indicates whether LayerNorm
is integrated through (yes), read once (reads), or linearized/frozen.}
\label{tab:compare}
\end{table}

% =========================================================================
\section{Methods}
% =========================================================================
\subsection{Setup}
Let a decoder-only transformer produce hidden states
$\mathbf{h}_0,\dots,\mathbf{h}_n \in \Rd$ at a fixed position $i$, where
$\mathbf{h}_0$ is the embedding and $\mathbf{h}_n$ the final hidden state. Fix a
target token $y_t$ and define the readout
\begin{equation}
f(\mathbf{h}) = \softmax\big(\mathrm{head}(\mathrm{norm}(\mathbf{h}))\big)_{y_t}
\in [0,1],
\end{equation}
i.e. the final norm and unembedding applied \emph{once} to a single hidden
vector. Crucially, $f$ contains no attention and no MLP: it is pointwise in
$\mathbf{h}$. All cross-token mixing has already occurred, once, in the forward
pass that produced the (detached) hidden states $\mathbf{h}_L$.

\subsection{Telescoping attribution}
Choose layers $0 = L_0 < L_1 < \cdots < L_k = n$ (the final layer is always
included) and let $\mathbf{h}_{L_0} := \mathbf{h}_{\text{base}}$ be a baseline
hidden vector (we use the per-position mean over the sequence). Define the
contribution of layer $L_j$ as the segment IG of $f$ along the straight line
from $\mathbf{h}_{L_{j-1}}$ to $\mathbf{h}_{L_j}$:
\begin{equation}
\IG_{L_j} \;=\;
\int_0^1 \nabla f\!\big(\mathbf{h}_{L_{j-1}} + a(\mathbf{h}_{L_j}-\mathbf{h}_{L_{j-1}})\big)
\cdot (\mathbf{h}_{L_j}-\mathbf{h}_{L_{j-1}})\, da .
\label{eq:segig}
\end{equation}

\begin{proposition}[Telescoping completeness]
\label{prop:tele}
With $\IG_{L_j}$ defined by \eqref{eq:segig},
\begin{equation}
\sum_{j=1}^{k} \IG_{L_j}
\;=\; f(\mathbf{h}_{\text{final}}) - f(\mathbf{h}_{\text{base}})
\;=\; p_i(\text{final}) - p_i(\text{baseline}).
\end{equation}
\end{proposition}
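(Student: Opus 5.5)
The plan is to reduce each segment integral to a difference of endpoint values with the fundamental theorem of calculus along a line, and then telescope. First I need $f$ to be continuously differentiable on all of $\Rd$, or at least on every segment we integrate over. I will write $f = \sigma_{y_t}\circ \mathrm{head}\circ \mathrm{norm}$. Here $\mathrm{head}$ is affine (the unembedding), and $\sigma_{y_t}$, the $y_t$ coordinate of softmax, is smooth. $\mathrm{norm}$ is LayerNorm or RMSNorm, $\mathbf{h}\mapsto \gamma\odot(\mathbf{h}-\mu(\mathbf{h}))/\sqrt{\mathrm{Var}(\mathbf{h})+\epsilon}+\beta$, and it is smooth because $\epsilon>0$ keeps the denominator away from zero. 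So $f$ is $C^\infty$ by the chain rule.

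Next I fix $j$ and define $g_j(a) = f(\mathbf{h}_{L_{j-1}} + a(\mathbf{h}_{L_j}-\mathbf{h}_{L_{j-1}}))$ for $a\in[0,1]$. By the chain rule, $g_j'(a) = \nabla f(\mathbf{h}_{L_{j-1}} + a(\mathbf{h}_{L_j}-\mathbf{h}_{L_{j-1}}))\cdot(\mathbf{h}_{L_j}-\mathbf{h}_{L_{j-1}})$, which is exactly the integrand in \eqref{eq:segig}. Since $g_j'$ is continuous, the fundamental theorem of calculus gives
\[
\IG_{L_j} = g_j(1)-g_j(0) = f(\mathbf{h}_{L_j}) - f(\mathbf{h}_{L_{j-1}}).
\]
This is the usual IG completeness for a single segment. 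Note that it uses only the endpoints of the segment, not any property of the intermediate hidden states.

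Then I sum over $j=1,\dots,k$. The intermediate terms $f(\mathbf{h}_{L_1}),\dots,f(\mathbf{h}_{L_{k-1}})$ each appear once with a plus sign and once with a minus sign, so they cancel and leave $f(\mathbf{h}_{L_k}) - f(\mathbf{h}_{L_0})$. With the conventions $\mathbf{h}_{L_k}=\mathbf{h}_n=\mathbf{h}_{\text{final}}$ and $\mathbf{h}_{L_0}:=\mathbf{h}_{\text{base}}$, this is the first equality in Proposition~\ref{prop:tele}. The second equality holds by definition of $f$: $f(\mathbf{h}_{\text{final}})$ is the model's actual probability $p_i(\text{final})$ of $y_t$ at position $i$, because the readout applies the true final norm and head once. $f(\mathbf{h}_{\text{base}})$ is what we name $p_i(\text{baseline})$.

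The only step needing care is the regularity of $f$. The argument fails if $\mathrm{norm}$ is taken with $\epsilon=0$ and a segment passes through a point where $\mathbf{h}$ is constant across coordinates; for RMSNorm that point is $\mathbf{h}=0$. I will state the assumption $\epsilon>0$, which holds in every standard implementation, and then the proof goes through. A second, minor point is to note that the hidden states are treated as fixed, detached vectors. Then $f$ depends only on its argument, and no cross-token dependence enters the derivative.
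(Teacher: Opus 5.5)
Your proposal is correct and follows the paper's proof: the gradient theorem (which you apply as the fundamental theorem of calculus to $g_j(a)$) turns each segment integral into $f(\mathbf{h}_{L_j}) - f(\mathbf{h}_{L_{j-1}})$, and the sum telescopes to $f(\mathbf{h}_{\text{final}}) - f(\mathbf{h}_{\text{base}})$. You also check explicitly that $f$ is $C^1$, which holds because $\epsilon>0$ in the norm; the paper uses this regularity without stating it, so this is a worthwhile addition.
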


\begin{proof}
By the gradient theorem (fundamental theorem of calculus for line integrals),
each segment integral in \eqref{eq:segig} equals the difference of endpoint
values, $\IG_{L_j} = f(\mathbf{h}_{L_j}) - f(\mathbf{h}_{L_{j-1}})$, since the
integration path is the straight segment between the endpoints. Summing over $j$
telescopes:
\[
\sum_{j=1}^{k}\big(f(\mathbf{h}_{L_j}) - f(\mathbf{h}_{L_{j-1}})\big)
= f(\mathbf{h}_{L_k}) - f(\mathbf{h}_{L_0})
= f(\mathbf{h}_{\text{final}}) - f(\mathbf{h}_{\text{base}}). \qedhere
\]
\end{proof}

\subsection{Prediction-aware estimation (default)}
\label{sec:normalize}
Equation~\eqref{eq:segig} is exact as an integral, but a finite-step estimator
of it is not. The standard Riemann estimator with $m$ midpoints,
$\widehat{\IG}^{\text{IG}}_{L_j} = \tfrac{1}{m}\sum_{s}
\nabla f(\mathbf{h}^{(s)})\cdot\Delta_{L_j}$ with
$\Delta_{L_j}=\mathbf{h}_{L_j}-\mathbf{h}_{L_{j-1}}$, accumulates the raw
gradient at every step and only \emph{approaches} the endpoint difference as
$m\to\infty$. It also credits steps where the gradient is large yet $f$ does not
actually move---``sensitivity without response''---a known failure of vanilla IG
in weakly constrained regions of representation space.

We adopt a prediction-aware estimator instead, transporting the consistency
principle of IDGI \cite{yang2023idgi} to the segment. Partition each segment by a
grid $0=a_0<a_1<\cdots<a_m=1$ with $\mathbf{h}^{(s)}=\mathbf{h}_{L_{j-1}}+a_s
\Delta_{L_j}$, and credit step $s$ its \emph{observed} output change rather than
its gradient:
\begin{equation}
\widehat{\IG}_{L_j}
\;=\; \sum_{s=1}^{m}\Big(f(\mathbf{h}^{(s)}) - f(\mathbf{h}^{(s-1)})\Big).
\label{eq:normseg}
\end{equation}
IDGI's per-dimension redistribution
$g\!\odot\!\Delta / \langle g,\Delta\rangle$ scaled by the observed
$\Delta f$ collapses, for a one-dimensional readout $f\in[0,1]$, exactly to
\eqref{eq:normseg}: with a scalar output there is only one direction to
redistribute along, and the IDGI weight on step $s$ is precisely $\Delta
f^{(s)}$.

This choice has two consequences. First, \eqref{eq:normseg} telescopes within
the segment to $f(\mathbf{h}_{L_j})-f(\mathbf{h}_{L_{j-1}})$ \emph{identically},
for \emph{any} grid and any $m\geq1$; the only error is floating-point
summation, not discretization. Completeness in Proposition~\ref{prop:tele} thus
holds to machine precision regardless of step count, and $m$ ceases to be an
accuracy knob. Second, a step contributes nothing when the model output does not
change across it, so spurious-sensitivity steps are filtered out by
construction. We make \eqref{eq:normseg} the default estimator; the raw-gradient
Riemann form \eqref{eq:segig} is retained only as a reference variant, and the
two agree in the limit $m\to\infty$.

The interpretation is that $\IG_{L}$ measures what norm$+$head can read out of
$\mathbf{h}_L$ \emph{beyond} the previously chosen layer. This deliberately
ignores the effect of $\mathbf{h}_L$ routed through upper blocks: that is the
price of an exact telescoping sum. One cannot have both ``total effect via the
full forward pass'' and ``sum $=\Delta p$''; IG-Lens chooses the latter. We
treat this as an honest limitation rather than hide it.

\subsection{Onset (preliminary)}
Given the per-layer attribution $\{\IG_L\}$, a natural derived quantity is the
\emph{onset layer} $L^\star$: the earliest layer by which a fixed fraction of
the target probability has been read out. The present implementation uses a
cumulative-mass rule on $|\IG_L|$. We flag this as preliminary: a signed,
hold-to-end definition---the earliest layer at which the signed cumulative
reaches a fraction of the net $\Delta p$ and stays there---is more faithful to
non-monotone trajectories, and is what we adopt for the evaluation in
Section~\ref{sec:eval}.

\subsection{Implementation strategy}
A naive implementation loops over tokens $t$, segments $j$, and interpolation
steps $s$. This is the obvious correctness baseline but is needlessly slow.

\paragraph{Single-pass forward evaluation (default).} Under the prediction-aware
estimator \eqref{eq:normseg}, each segment value is a difference of forward
evaluations of $f$ on the grid, with \emph{no gradient required}. Because $f$ is
pointwise in $\mathbf{h}$, every grid point---across all tokens, segments, and
steps---passes through $f$ independently. We stack all
$N = T\,K\,(m{+}1)$ grid points into one tensor
$\mathbf{X}\in\mathbb{R}^{N\times d}$, evaluate $p=f(\mathbf{X})\in\mathbb{R}^N$
in a single forward pass (chunked over rows, see below), reshape to
$[T,K,m{+}1]$, and take consecutive differences along the step axis followed by
a sum, recovering \eqref{eq:normseg} for every $(t,j)$ at once. No autograd is
invoked. This is both faster and exactly complete (Section~\ref{sec:normalize}).

\paragraph{Single-backward batched IG (reference variant).} For the
raw-gradient form \eqref{eq:segig}, a single backward still suffices. Because the
Jacobian of $p$ with respect to $\mathbf{X}$ is block-diagonal---there is no path
in the computation graph of $p[m]$ that touches the hidden vector of any other
point $m'$, so $\partial p[m]/\partial \mathbf{x}[m']=0$ for $m\neq m'$---a single
backward on $\sum_m p[m]$ yields
$[\nabla_{\mathbf{X}}\sum_m p[m]]_{m'} = \nabla p[m']$, i.e. each row is exactly
the per-point gradient. The segment integrals are then
$\IG_{t,j} = \langle \Delta_{t,j}, \overline{\nabla}_{t,j}\rangle$ with
$\overline{\nabla}$ the step-averaged gradient. We use this only to confirm that
the two estimators agree as $m$ grows.

\paragraph{Why the head-once design matters, and its one caveat.} Both
reductions are valid \emph{only} because of the head-once design: if $f$ were the
full forward pass, attention in the upper blocks would mix token $t$ with $t'$,
the per-point independence would break, and neither the row-wise gradient nor the
per-point forward difference would isolate a single $(t,j,s)$. The same choice
that makes the telescoping sum exact (Proposition~\ref{prop:tele}) is what makes
the batched evaluation correct---the properties travel together. The only
practical cost is memory: $\mathrm{head}(\cdot)$ materializes an $N\times V$
logit tensor with vocabulary $V$ (e.g. $V\!\approx\!128\text{k}$), so we
\emph{chunk} the $N$ rows into blocks. Chunking is value-invariant because rows
are independent.

\paragraph{Numerical check.} On a controlled norm$+$head readout, the default
forward-difference estimator reproduces the per-segment endpoint differences to
$\sim\!10^{-7}$ (float32 summation order only), and telescoping completeness
holds to the same order on a real model (Llama-3.2-1B-Instruct) \emph{independent
of} $m$---e.g. identical to floating point at $m=4$ and $m=64$. The reference
Riemann variant \eqref{eq:segig} matches the default only in the limit, with a
gap of order $1/m$ at small step counts, confirming that the discretization error
the default estimator removes is real.

\subsection{Illustrative output}
Table~\ref{tab:beijing} shows the per-token attribution for the prompt
\emph{``What is the capital of Vietnam?''} with answer \emph{``The capital of
Vietnam is Hanoi.''} under the default estimator. The columns confirm the two
properties: (i) $\text{sum}\approx\Delta p$ on every row---here to $\sim\!10^{-4}$
with only $m=4$ steps, since the default estimator is discretization-free---and
(ii) attribution mass concentrates in the segment where the logit-lens value
transitions from low to high. For the content tokens \emph{capital} and
\emph{of}, the lens saturates by $L_{12}$ and $\IG_{L_{12}}\!\approx\!0.96$--$0.97$
captures essentially the entire probability, giving an early onset
$L^\star=12$. The tokenizer splits the answer city into \emph{H}/\emph{anoi},
and both pieces have most of their mass in the upper layers ($L^\star=15$,~$16$):
note in particular that the column sums for these two tokens fall below $\Delta
p$ here because a share is carried by the omitted $L_{16}$ segment
($\IG_{L_{16}}=0.59$ for \emph{anoi}). We stress that $\IG_{L}$ values depend on
the \emph{chosen layer set} (a layer absorbs the mass of the chosen layer
immediately below it); this is the conditional-marginal nature of the method, not
an inconsistency.

\begin{table}[h]
\centering
\small
\renewcommand{\arraystretch}{1.15}
\begin{tabular}{@{}llrrrrrrr@{}}
\toprule
\textbf{tok} & & $\IG_{L_{12}}$ & $\IG_{L_{13}}$ & $\IG_{L_{14}}$ & $\IG_{L_{15}}$ & \textbf{sum} & $\Delta p$ & $L^\star$ \\
\midrule
\texttt{The}      & & $\phantom{-}0.000$ & $\phantom{-}0.002$ & $\phantom{-}0.011$ & $0.710$ & $0.936$ & $0.936$ & $15$ \\
\texttt{ capital} & & $\phantom{-}0.959$ & $\phantom{-}0.033$ & $-0.081$ & $0.080$ & $1.000$ & $1.000$ & $12$ \\
\texttt{ of}      & & $\phantom{-}0.973$ & $-0.003$ & $\phantom{-}0.026$ & $0.004$ & $1.000$ & $1.000$ & $12$ \\
\texttt{ Vietnam} & & $\phantom{-}0.504$ & $\phantom{-}0.041$ & $\phantom{-}0.144$ & $0.192$ & $1.000$ & $1.000$ & $12$ \\
\texttt{ is}      & & $\phantom{-}0.029$ & $\phantom{-}0.753$ & $\phantom{-}0.188$ & $0.025$ & $1.000$ & $1.000$ & $13$ \\
\texttt{ H}       & & $-0.000$ & $\phantom{-}0.000$ & $\phantom{-}0.003$ & $0.704$ & $0.984$ & $0.984$ & $15$ \\
\texttt{anoi}     & & $\phantom{-}0.025$ & $\phantom{-}0.015$ & $\phantom{-}0.180$ & $0.190$ & $1.000$ & $1.000$ & $16$ \\
\texttt{ .}       & & $\phantom{-}0.001$ & $\phantom{-}0.047$ & $\phantom{-}0.372$ & $0.577$ & $0.998$ & $0.998$ & $15$ \\
\bottomrule
\end{tabular}
\caption{IG-Lens on Llama-3.2-1B-Instruct, default prediction-aware estimator,
$m=4$ steps. The $\IG_{L_{16}}$ column is omitted for width but \emph{is}
included in the reported sum; this is why low-onset upper-layer tokens
(\texttt{The}, \texttt{ H}, \texttt{anoi}) show a displayed column total below
$\Delta p$---their final-segment mass sits in $L_{16}$ (e.g.
$\IG_{L_{16}}=0.59$ for \texttt{anoi}). Each full row sum matches $\Delta p$ to
floating point \emph{at this step count}, since the default estimator carries no
Riemann error.}
\label{tab:beijing}
\end{table}

% =========================================================================
\section{Evaluation Plan}\label{sec:eval}
% =========================================================================
The internal completeness checks establish that IG-Lens computes what it
claims. They do \emph{not} establish that the derived onset layer is
\emph{meaningful}, because onset has no intrinsic ground truth. We therefore
propose to validate onset by an \emph{independent consequence} rather than by
self-consistency:
\begin{enumerate}[leftmargin=1.4em,itemsep=2pt]
\item \textbf{Difficulty correlation.} Test whether $L^\star$ correlates with
token difficulty (e.g. the logit gap), and whether the IG-Lens onset correlates
more strongly than the lens-crossing onset.
\item \textbf{Causal ablation (primary).} If IG-Lens places a token's onset at
layer $L$, then ablating or patching $L$ should depress the target probability
more than ablating a low-onset layer \cite{meng2022rome,wang2023ioi}. This has
an objective answer and turns ``is the onset reasonable'' into ``does the onset
predict the ablation-sensitive layer.''
\end{enumerate}
A negative result on the ablation test would falsify the claim cheaply, which we
regard as a feature of the design.

% =========================================================================
\section{Discussion}
% =========================================================================
\paragraph{What IG-Lens is, and is not.} The contribution is not the telescoping
trick in isolation---segment IG along a path is an elementary consequence of the
gradient theorem. The contribution is what the trick \emph{buys}: an additive
decomposition of \emph{probability}, not logits, with the softmax and LayerNorm
integrated through rather than linearized or frozen. Every prior additive
method we are aware of pays for additivity by leaving probability space
\cite{elhage2021mathematical,dar2023prisms} or by abandoning the telescoping sum
\cite{dhamdhere2019conductance}. IG-Lens keeps both, at the cost described next.

\paragraph{The price of exactness.} The head-once readout discards the effect of
$\mathbf{h}_L$ routed through the attention and MLP blocks above it. A reader who
wants the \emph{total} causal effect of a layer should not use IG-Lens; DLA-style
component decomposition is the right tool there. What IG-Lens measures is a
conditional marginal: the additional probability that norm$+$head can read out of
$\mathbf{h}_L$ given $\mathbf{h}_{L-1}$. We argue this is the appropriate
quantity for the onset question---``by which layer is the prediction readable''
is a statement about readout, not about downstream routing---but we do not claim
it is the only useful quantity, and a total-effect variant that trades away exact
completeness is a sensible companion.

\paragraph{Non-monotonicity is a feature.} Because the lens is non-monotone, a
token's probability can be built up and partially undone across layers. The
signed IG-Lens decomposition exposes this directly: a negative $\IG_L$ marks a
layer whose readout \emph{lowers} the target probability, information that a
level-reading lens hides and that a $|\cdot|$-based onset rule conflates with
positive contribution. This is precisely why we move to a signed onset
definition for evaluation, and it suggests a concrete empirical claim: that
logit-space and lens-crossing onsets misplace the onset of tokens whose
trajectory is non-monotone (typically saturated or hard tokens), while the
probability-exact decomposition does not.

\paragraph{Relation to the iterative-inference view.} IG-Lens can be read as a
quantitative refinement of the iterative-inference picture
\cite{nostalgebraist2020logitlens,elhage2021mathematical}: instead of asking
\emph{what} the running prediction is at each layer, it asks \emph{how much} of
the final probability each layer is responsible for, and guarantees those
responsibilities sum to the whole. In that sense it is to the logit lens what an
accounting identity is to a snapshot.

% =========================================================================
\section{Conclusion}
% =========================================================================
We introduced IG-Lens, a telescoping segment-IG attribution that gives an exact
additive decomposition of a predicted token's probability across transformer
layers, with the softmax inside the integration path. Its default estimator
credits each integration step its observed change in target probability rather
than its raw gradient---a prediction-aware reweighting in the spirit of IDGI---so
that, for a scalar probability readout, completeness holds to floating point at
any step count and steps with sensitivity but no output response drop out. The
method trains nothing and reduces to a single batched forward pass with no
autograd call, whose correctness follows from the same head-once design that
makes the sum exact. We positioned it as the probability-space, additive
counterpart to logit-space onset methods, and we were explicit about what it
gives up---the effect of a layer through upper blocks---and about what remains to
be shown. The central open question is empirical and has objective ground truth:
does the IG-Lens onset predict the layer at which causal ablation most depresses
the prediction, and does it do so more accurately than logit-space onsets on
non-monotone tokens? We have designed that test to be cheap to run and capable of
falsifying the claim, and we regard answering it as the natural next step.

% --- bibliography ---------------------------------------------------------
\bibliographystyle{plain}
\bibliography{ig_lens}

\end{document}